
\documentclass[sigconf]{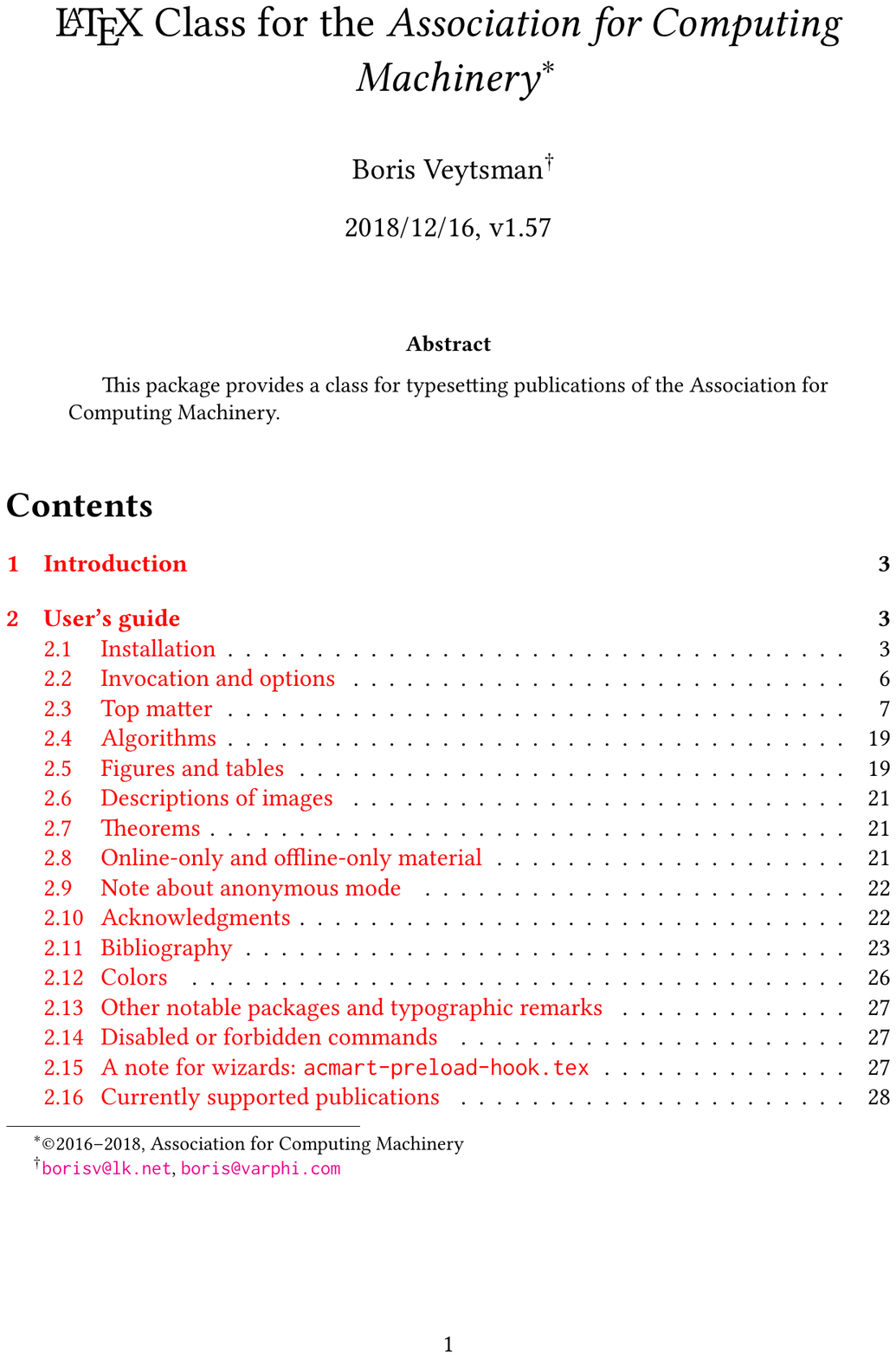}

\settopmatter{printacmref=true}

\fancyhead{}

\usepackage{balance}

\def\BibTeX{{\rm B\kern-.05em{\sc i\kern-.025em b}\kern-.08emT\kern-.1667em\lower.7ex\hbox{E}\kern-.125emX}}
    
%

\copyrightyear{2019} 
\acmYear{2019} 
\setcopyright{acmlicensed}
\acmConference[KDD '19]{The 25th ACM SIGKDD Conference on Knowledge Discovery and Data Mining}{August 4--8, 2019}{Anchorage, AK, USA}
\acmBooktitle{The 25th ACM SIGKDD Conference on Knowledge Discovery and Data Mining (KDD '19), August 4--8, 2019, Anchorage, AK, USA}
\acmPrice{15.00}
\acmDOI{10.1145/3292500.3330898}
\acmISBN{978-1-4503-6201-6/19/08}



\usepackage[labelformat=simple]{subcaption}

\usepackage{booktabs} 
\usepackage{amsfonts}       
\usepackage{nicefrac}       
\usepackage{microtype}      
\usepackage{epsf, latexsym}
\usepackage{xspace, amsmath, amssymb, bm}
\usepackage{color,amsthm}
\usepackage{graphicx}
\usepackage{color}
\usepackage{tabularx}
\usepackage{varwidth}

\usepackage{bbm}
\usepackage{algorithm}
\usepackage[noend]{algpseudocode}
\newtheorem{thm}{Theorem}

\newtheorem{prop}{Proposition}

\newtheorem{defn}{Definition}

\def\QED{\mbox{\rule[0pt]{1.5ex}{1.5ex}}}
\def\endproof{\hspace*{\fill} \QED\par\endtrivlist\unskip}

\newcolumntype{R}{>{\raggedleft\arraybackslash}X}
\newcommand{\pluseq}{\mathrel{+}=}

\def\x{{{\mathbf x}}}

\def\F{\ensuremath{{\mathcal F}}\xspace}

\def\P{\ensuremath{{\mathbb P}}\xspace}
\def\D{\ensuremath{{\mathcal D}}\xspace}
\def\E{\ensuremath{{\mathbb E}}\xspace}

\def\R{\ensuremath{{\mathbb R}}\xspace}

\def\X{\ensuremath{{\mathcal X}}\xspace}
\def\Y{\ensuremath{{\mathcal Y}}\xspace}

\DeclareMathOperator*{\argmin}{argmin}
\DeclareMathOperator*{\argmax}{argmax}

\begin{document}

\title{Axiomatic Interpretability for Multiclass Additive Models}
\author{Xuezhou Zhang}
\affiliation{University of Wisconsin-Madison}
\email{xzhang784@wisc.edu}

\author{Sarah Tan}
\affiliation{Cornell University}
\email{ht395@cornell.edu}

\author{Paul Koch}
\affiliation{Microsoft Research}
\email{paulkoch@microsoft.com}

\author{Yin Lou}
\affiliation{Ant Financial}
\email{yin.lou@antfin.com}

\author{Urszula Chajewska}
\affiliation{Microsoft}
\email{urszc@microsoft.com}

\author{Rich Caruana}
\affiliation{Microsoft Research}
\email{rcaruana@microsoft.com}
\email{} 
\email{} 

%
%

\begin{abstract}
Generalized additive models (GAMs) are favored in many regression and binary classification problems because they are able to fit complex, nonlinear functions while still remaining interpretable. In the first part of this paper, we generalize a state-of-the-art GAM learning algorithm based on boosted trees to the multiclass setting, showing that this multiclass algorithm outperforms existing GAM learning algorithms and sometimes matches the performance of full complexity models such as gradient boosted trees. 

In the second part, we turn our attention to the interpretability of GAMs in the multiclass setting. Surprisingly, the natural interpretability of GAMs breaks down when there are more than two classes. Naive interpretation of multiclass GAMs can lead to false conclusions. Inspired by binary GAMs, we identify two axioms that any additive model must satisfy in order to not be visually misleading. We then develop a technique called Additive Post-Processing for Interpretability (API) that provably transforms a pretrained additive model to satisfy the interpretability axioms without sacrificing accuracy. The technique works not just on models trained with our learning algorithm, but on any multiclass additive model, including multiclass linear and logistic regression. We demonstrate the effectiveness of API on a 12-class infant mortality dataset.
\end{abstract}

\maketitle
\section{Introduction}
\label{sec:introduction}
Interpretable models, though sometimes less accurate than black-box models, are preferred in many real-world applications. In criminal justice,
finance, hiring, and other domains that impact people's lives, interpretable models are often used because their transparency helps determine if a model is biased or unsafe \cite{zeng2016interpretable,tan2017detecting}. And in critical applications such as healthcare, where human experts and machine learning models often work together, being able to understand, learn from, edit and trust the learned model is also important \cite{caruana2015intelligible, Holstein2018practitioner}.
\begin{figure}[t!]
	\centering
	\begin{subfigure}[t]{0.49\columnwidth}
		\centering
		\includegraphics[width=1.0\columnwidth]{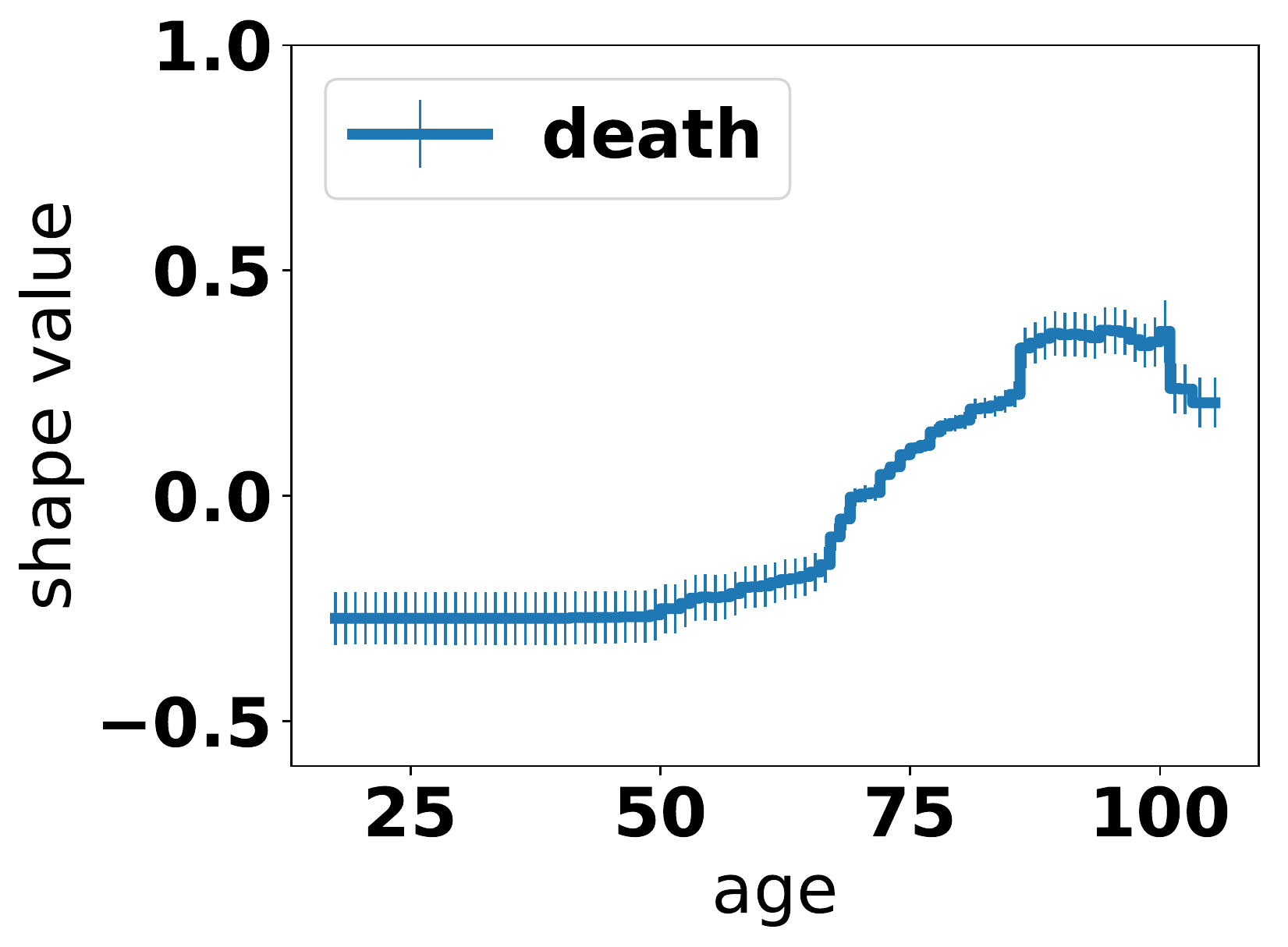} 
		\caption{Binary GAM age shape}
		\label{fig:age_binary}
	\end{subfigure}
	~
	\begin{subfigure}[t]{0.49\columnwidth}
		\centering
		\includegraphics[width=1.0\columnwidth]{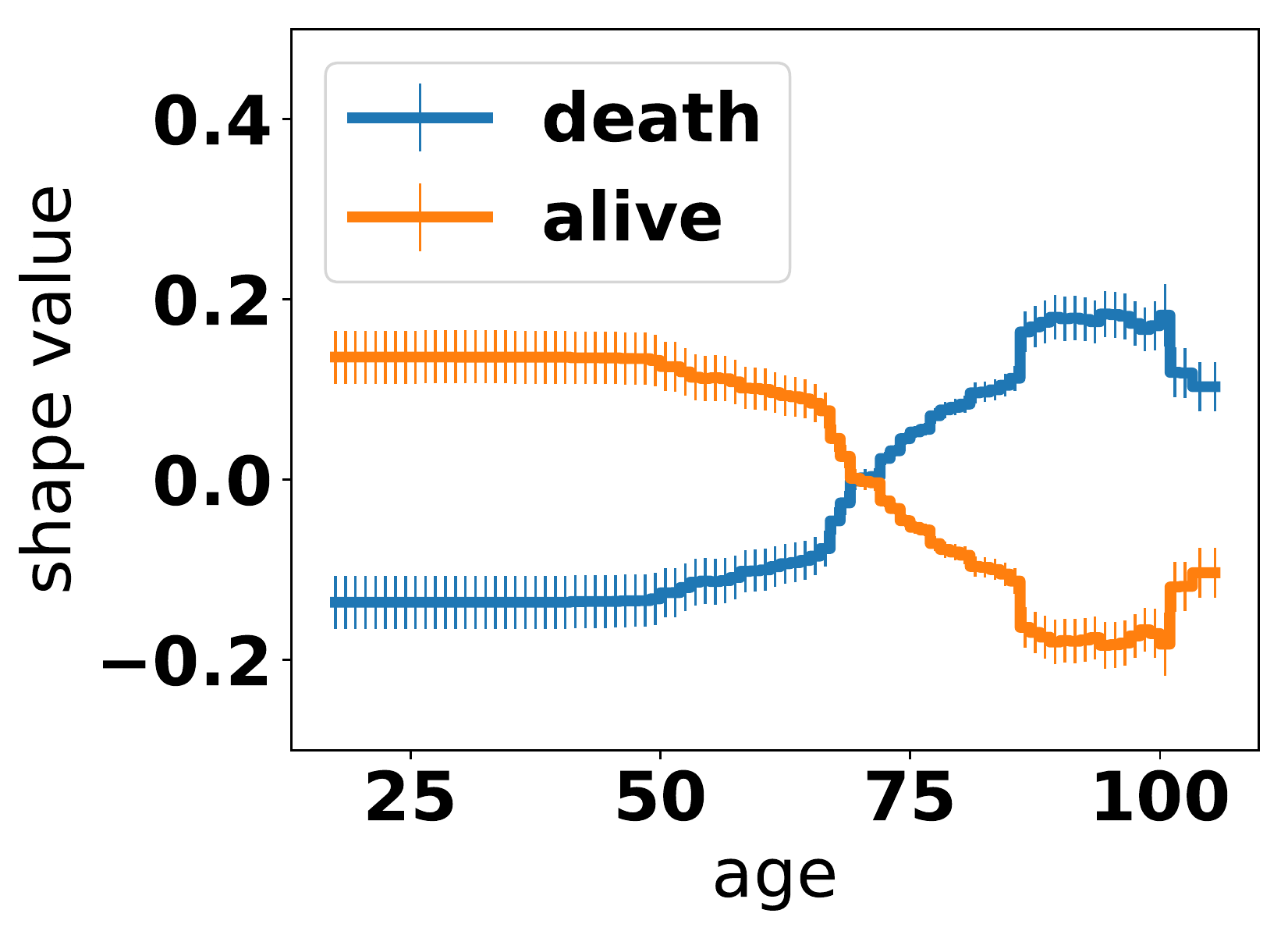}
		\caption{Multiclass GAM age shape}
		\label{fig:age_mtc}
	\end{subfigure}
	\caption{\small Shape functions for age in the pneumonia problem \cite{caruana2015intelligible}.}
	\label{fig:age}
	\vspace*{-0.1in}
\end{figure}
Generalized additive models (GAMs) are among the most powerful interpretable models when individual features play major effects \cite{hastie1990generalized,lou2012intelligible}. In the binary classification setting, we consider standard GAMs with logistic probabilities:
$\hat\P(Y=1)=(1+\exp(-F(x)))^{-1}$, where the \textit{logit} $F(x)$ is an additive function of individual features:
\begin{eqnarray}
F(x) = \sum_{i=1}^d f_i(x_i).
\end{eqnarray}
in which $d$ is the number of features. Here, $x_i$ is the $i$-th feature of data point $x$, and we denote $f_i$ the \textit{shape function} of feature $i$ for the positive class.
Previously, \citeauthor{lou2012intelligible} evaluated various GAM fitting algorithms, and found that gradient boosting of shallow bagged trees that cycle one-at-a-time through the features outperformed other methods on a number of regression and binary classification datasets \cite{lou2012intelligible}. Their model is called the Explainable Boosting Machine (EBM).\footnote{New code for training EBM additive models has recently been released and can be found at \url {https://github.com/microsoft/interpret}.} The first part of this paper generalizes EBMs to the multiclass setting. We consider standard GAMs with softmax probabilities:
\begin{eqnarray}\label{eq:mtc_gam}
\hat\P(Y=k) = \frac{\exp\left(F_k(x)\right)}{\sum_{j=1}^{K}\exp\left(F_j(x)\right)},
\end{eqnarray}
where the \textit{logit of class} $k$, $F_k(x)$, is also an additive function of individual features, $F_k(x) = \sum_{i=1}^df_{ik}(x_i)$ and $f_{ik}$ is the shape function of feature $i$ for class $k$. We present our multiclass GAM fitting algorithm, MC-EBM, in Section~\ref{sec:cyclic_gradient_boosting} and in Section~\ref{sec:accuracyresults} we empirically evaluate its performance on five large-scale, real-world datasets.

\begin{figure}[h!]
	\centering
	\begin{minipage}[t]{0.464\columnwidth}
		\centering
		\includegraphics[width=\columnwidth]{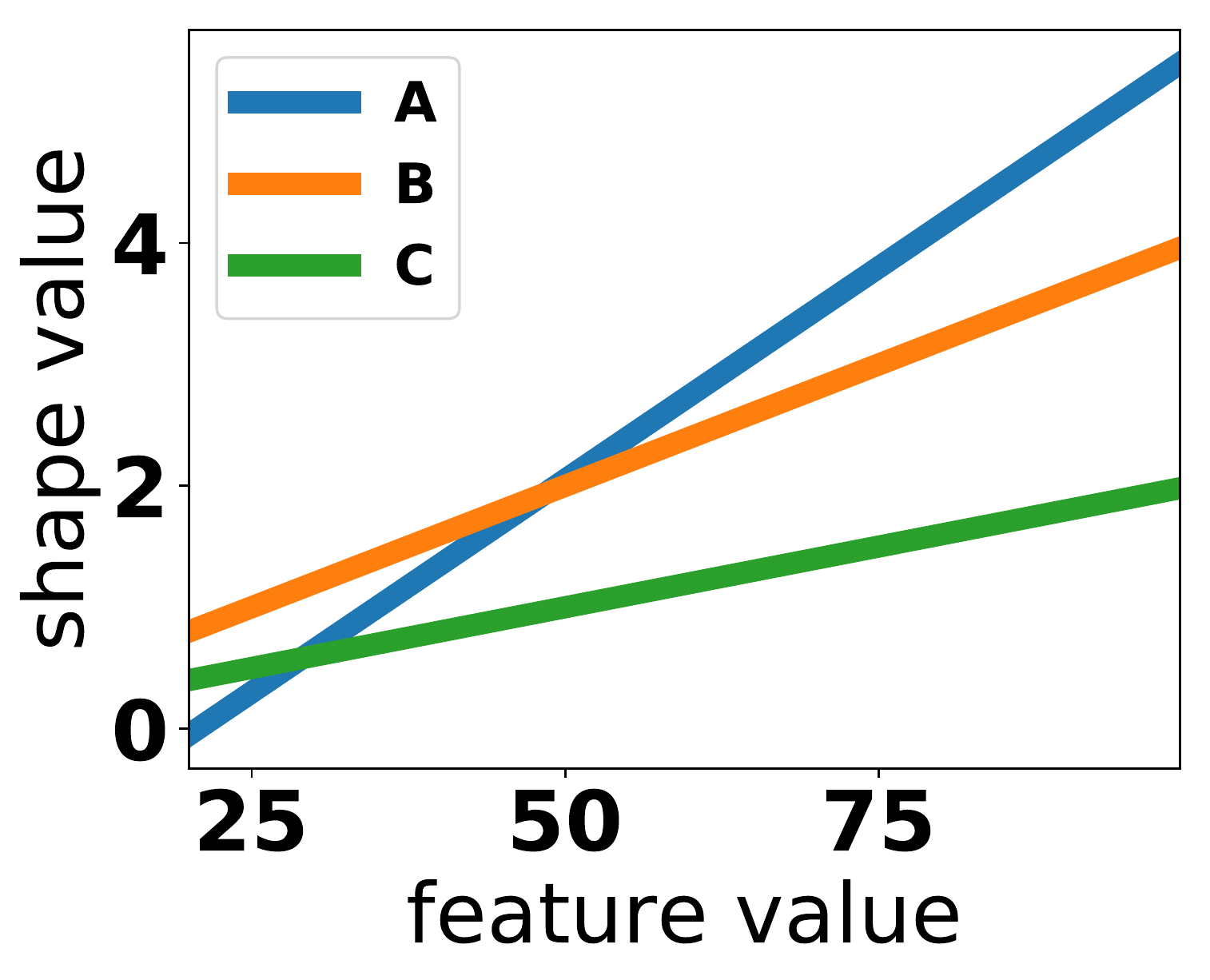} 
		\subcaption{Toy model 1}
		\label{fig:toy1}
	\end{minipage}~
	\begin{minipage}[t]{0.49\columnwidth}
		\centering
		\includegraphics[width=\columnwidth]{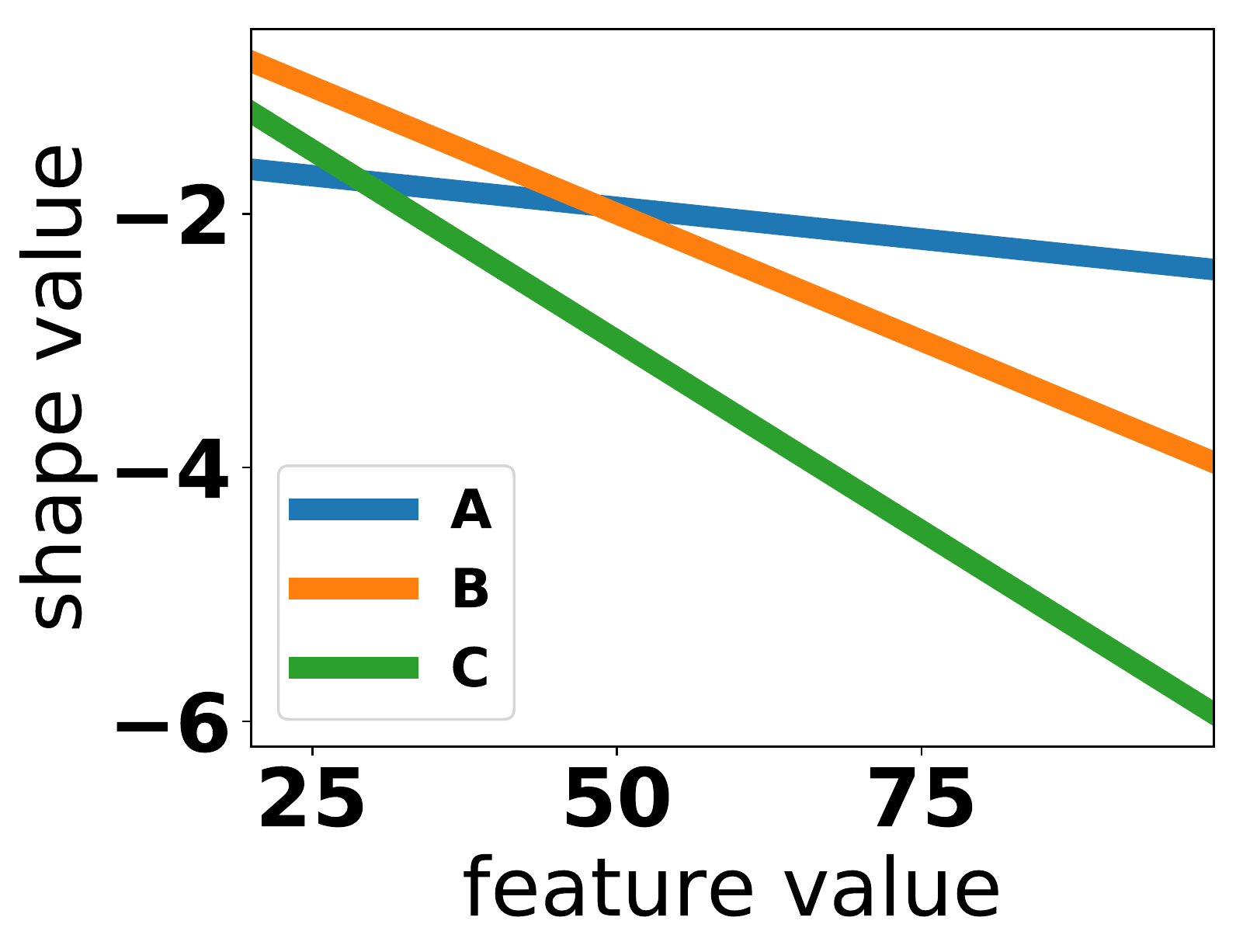} 
		\subcaption{Toy model 2}
		\label{fig:toy2}
	\end{minipage}\\
	\begin{minipage}[t]{0.49\columnwidth}
		\centering
		\includegraphics[width=\columnwidth]{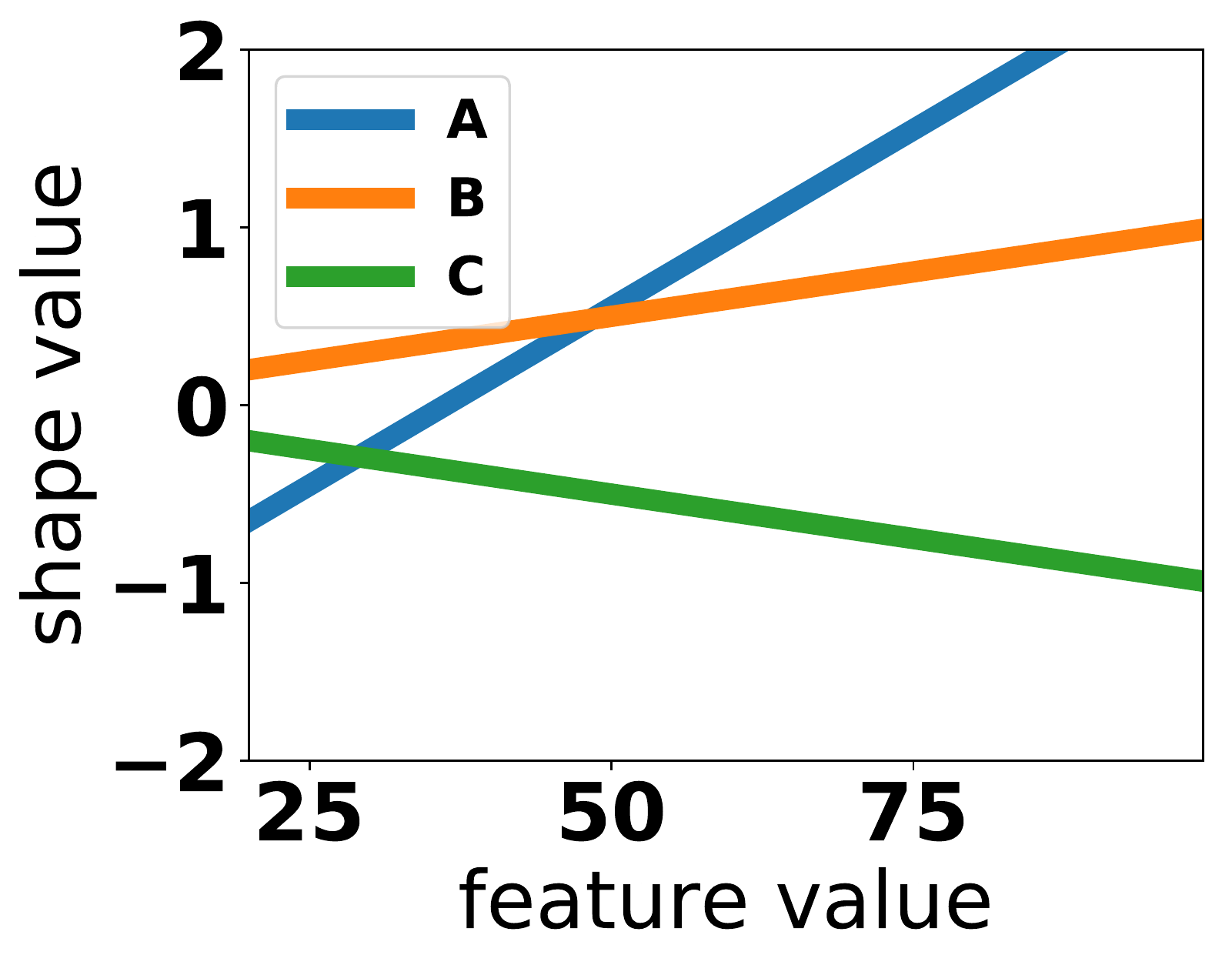} 
		\subcaption{Toy model 3}
		\label{fig:toy1}
	\end{minipage}~
	\begin{minipage}[t]{0.464\columnwidth}
		\centering
		\includegraphics[width=\columnwidth]{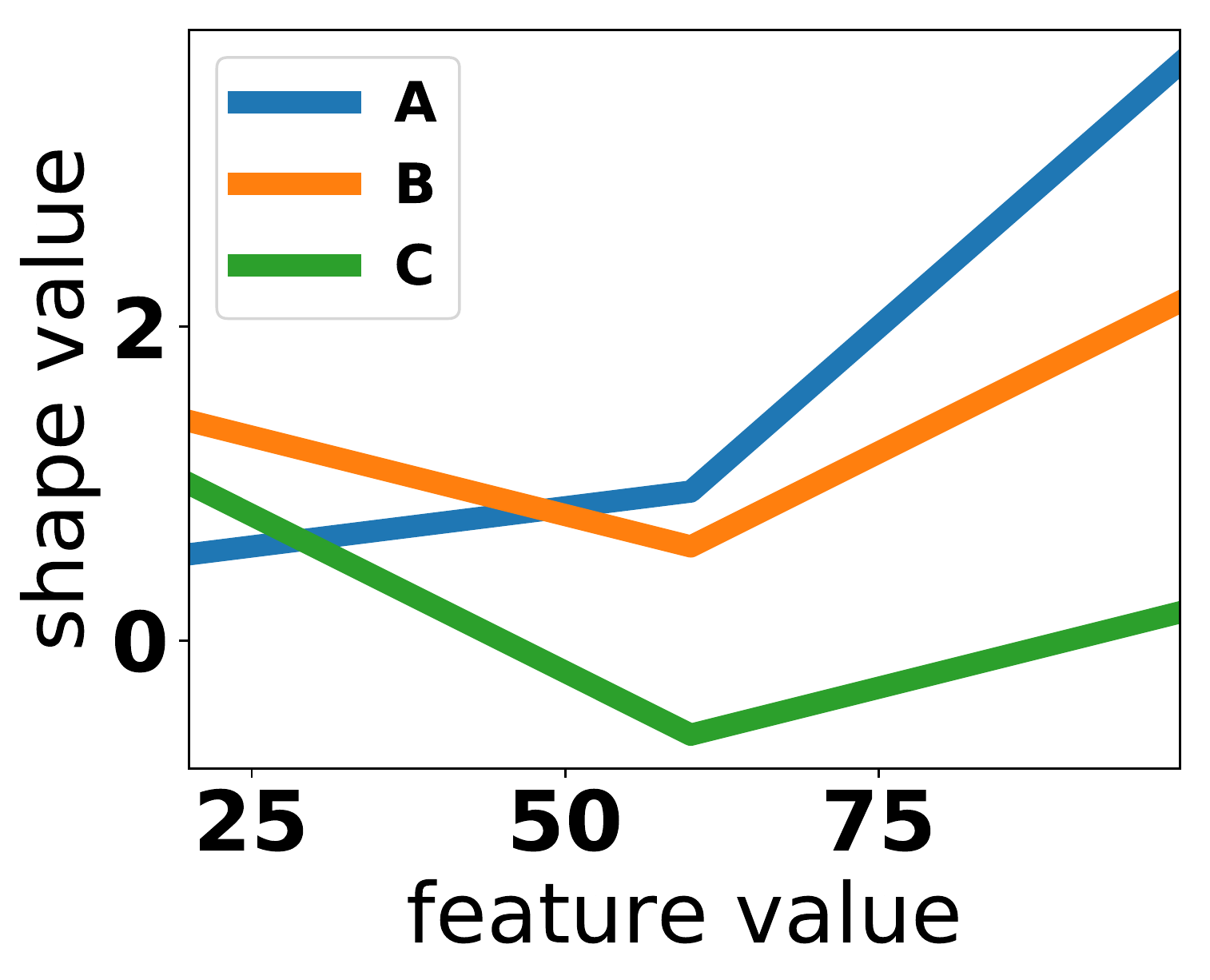} 
		\subcaption{Toy model 4}
		\label{fig:toy1}
	\end{minipage}\\
	\begin{minipage}[t]{0.49\columnwidth}
		\centering
		\includegraphics[width=\columnwidth]{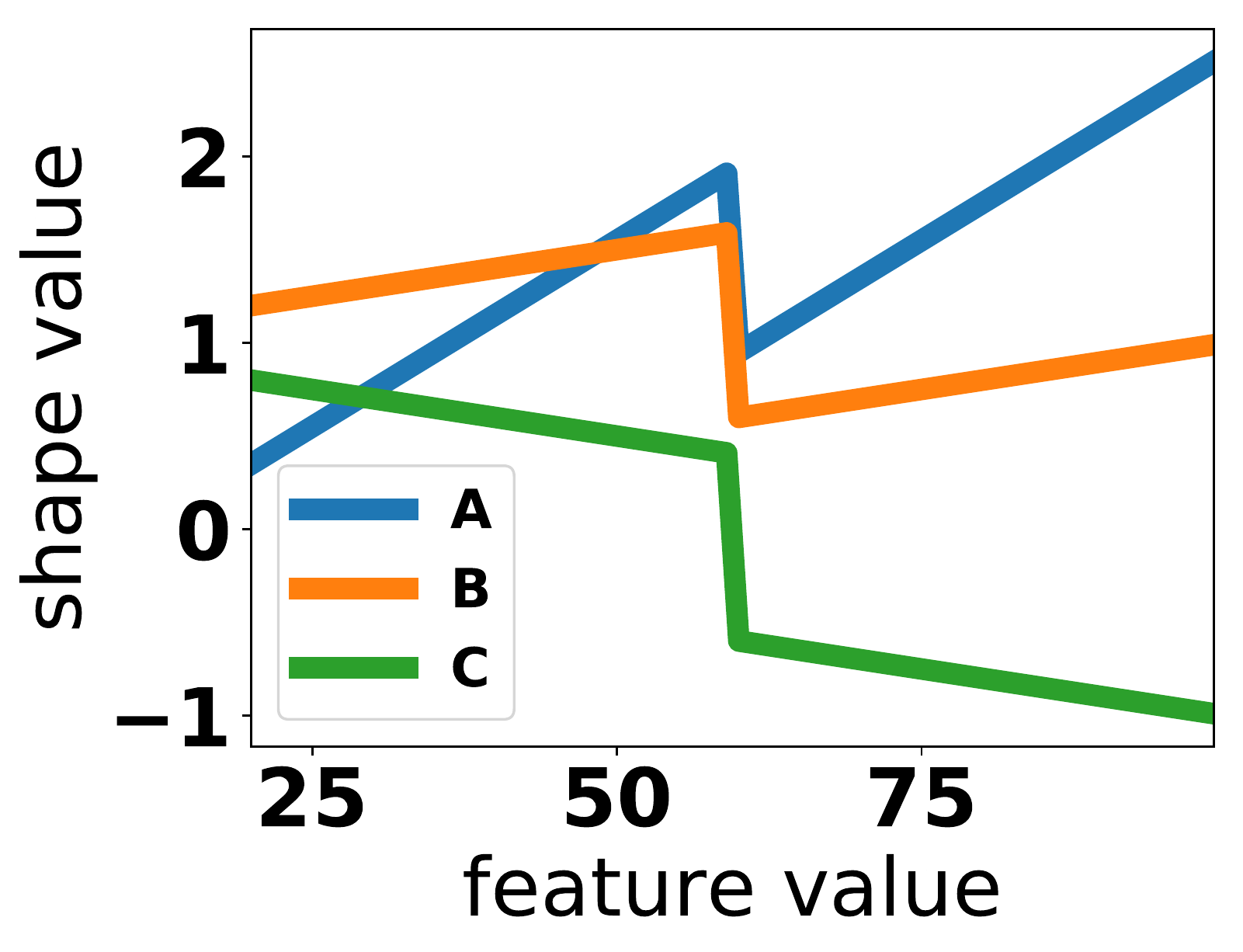} 
		\subcaption{Toy model 5}
		\label{fig:toy3}
	\end{minipage}~
	\begin{minipage}[t]{0.49\columnwidth}
		\centering
		\includegraphics[width=\columnwidth]{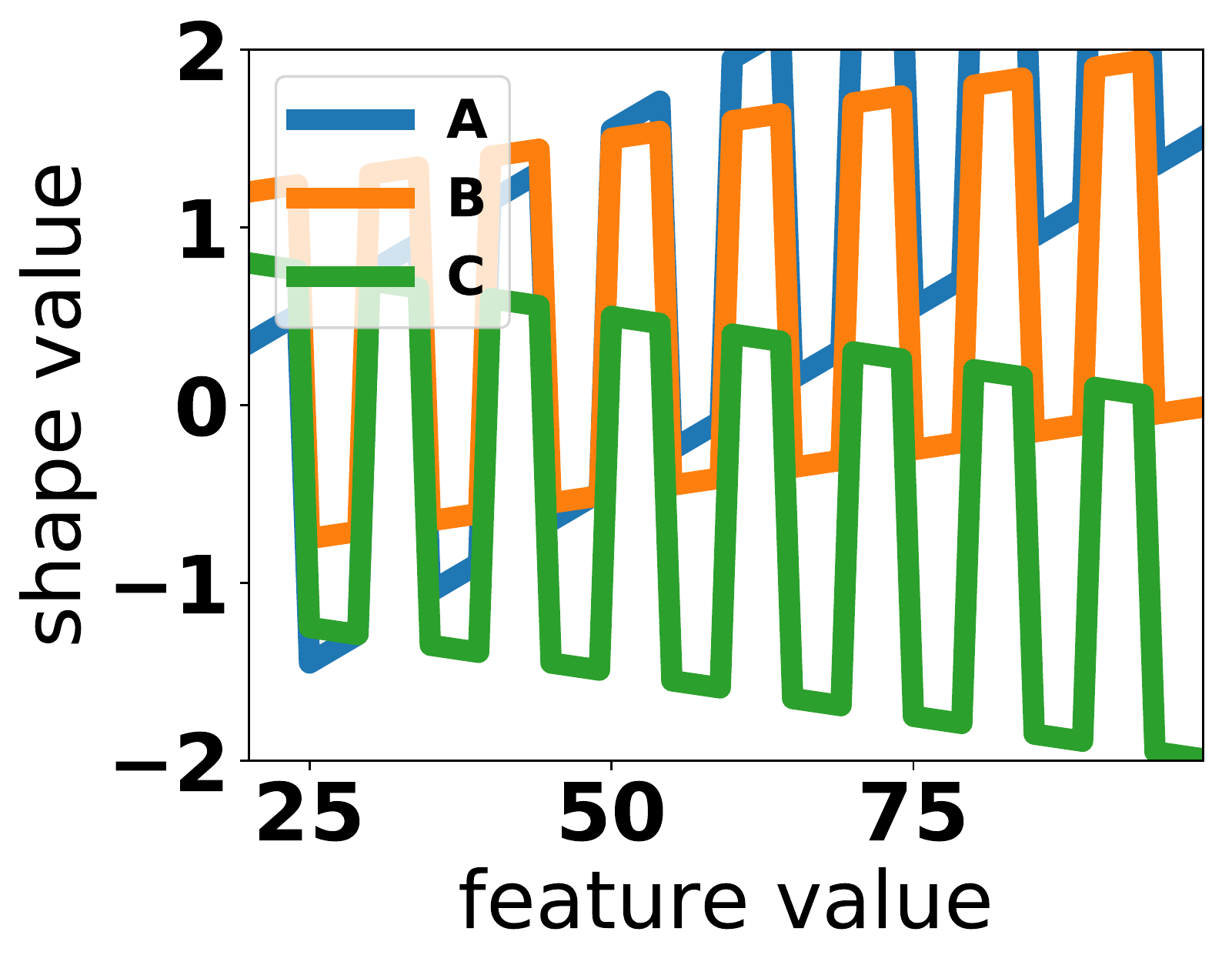} 
		\subcaption{Toy model 6}
		\label{fig:toy4}
	\end{minipage}\\
	\begin{minipage}[t]{0.49\columnwidth}
		\centering
		\includegraphics[width=\columnwidth]{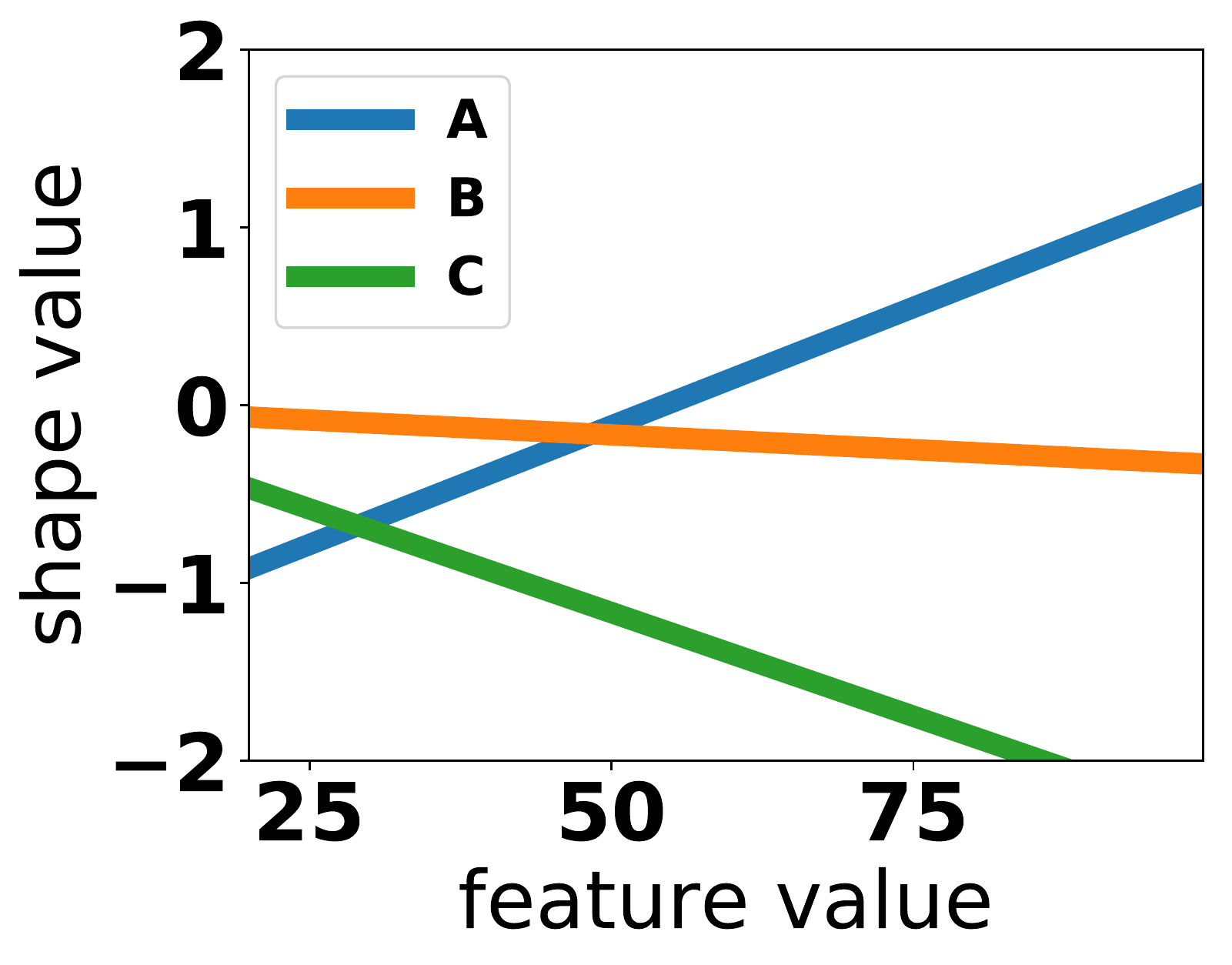} 
		\subcaption{Toy models after API}
		\label{fig:toy7}
	\end{minipage}~
	\begin{minipage}[t]{0.49\columnwidth}
		\centering
		\includegraphics[width=\columnwidth]{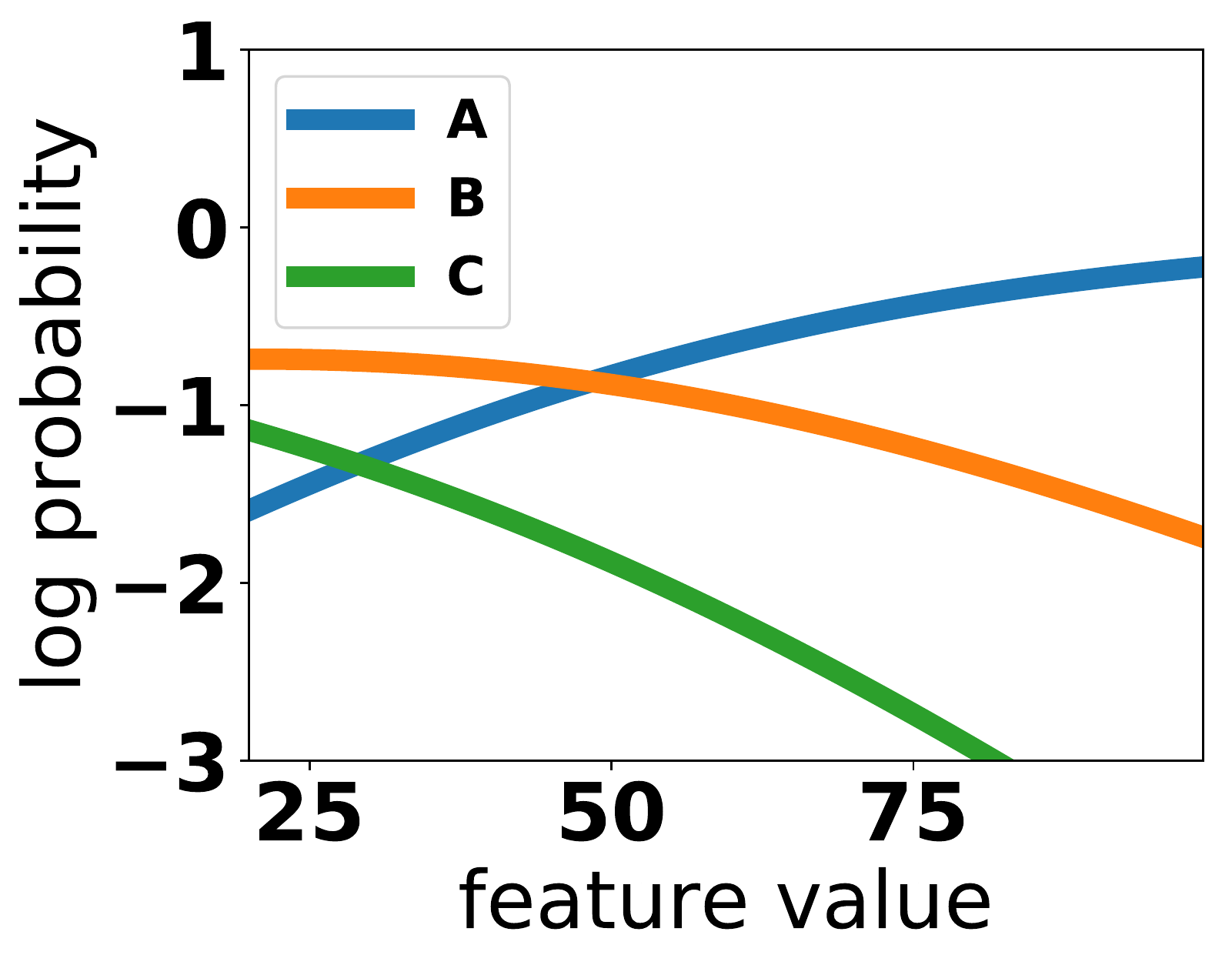} 
		\subcaption{True class probabilities}
		\label{fig:toy8}
	\end{minipage}
	\caption{GAM shape functions for a toy 3-class problem.}

	\label{fig:toy}
\end{figure}

Binary GAMs are readily interpretable because the influence of each feature $i$ on the outcome is captured by a {\em single} 1-d shape function $f_i$ that can be easily visualized.
For example, Figure~\ref{fig:age_binary} shows the relationship between age and the risk of dying from pneumonia. When interpreting shape functions like this, practitioners often focus on two key factors: the local monotonicity of the curve and the existence of discontinuities (if the feature value is continuous). For example, the `age' plot in Figure \ref{fig:age_binary} could be described by a physician as:
\begin{quote}
	``Risk is low and constant from age 18-50, rises slowly from age 50-67, then rises quickly from age 67-90. There is a small jump in risk at age 67, soon after typical retirement age, a surprising jump in risk at age 85, and a surprising drop in risk at about age 100.''
\end{quote}
In a binary logistic function, the rising, falling and ``jumps'' in each shape function faithfully correspond to the increasing, decreasing and sudden changes in the predicted probability, so this kind of summary is a faithful representation of the model's predictions.

In the multiclass setting, however, the influence of feature $i$ on class $k$ is no longer captured by a single shape function $f_{ik}$, but through the interplay of all $f_{ij}$'s, $j=1,...,K$. 
In particular, even if the logit for class $k$ is increasing, the probability for class $k$ might still decrease if the logits for other classes increase more rapidly. As a result, the learned shape functions, if presented without post-processing, can be visually misleading.
For example, Figures \ref{fig:toy}a-f show the shape functions of six toy GAM models with three classes and only one feature. Each model appears to have very different shape functions: \ref{fig:toy}(a) all rising, \ref{fig:toy}(b) all falling, \ref{fig:toy}(c) some falling, some rising,  \ref{fig:toy}(d) 2-of-3 falling, then all 3 rising,  \ref{fig:toy}(e) big drop in the middle,  \ref{fig:toy}(f) oscillating. {\em Interestingly, however, all six models make identical predictions.} Because these models have only one feature, we can actually plot the predicted probabilities as functions of the feature value (this is not possible with more than one feature). In Figure \ref{fig:toy}(h), one can see that class $A$'s probability is monotonically increasing, while class $B$ and $C$'s probabilities are monotonically decreasing, which is vastly different from any of the shape functions (a)-(f). If a domain expert examines the shape functions in \ref{fig:toy}(c), she/he is likely to be misled to believe that the predicted probabilities for both class A and B are increasing and only the predicted probability for class C is decreasing, which is inconsistent with ground truth.
This representation problem, if not solved, greatly reduces the interpretability of GAMs in multiclass problems.

The second half of this paper focuses on mitigating the misleadingness of multiclass GAM shapes. We start by examining how users interpret binary GAMs and identify a set of interpretability axioms --- criteria that GAM shapes should satisfy to guarantee interpretability. We then present several properties of additive models that make it possible to regain interpretability.
Making use of these properties, we design a method, Additive Post-Processing for Interpretability (API), that provably transforms any pretrained additive model to satisfy the axioms of interpretability \textbf{without} sacrificing any predictive accuracy. Figure \ref{fig:toy}(g) shows the shape functions that result from passing any of the models \ref{fig:toy}(a)-(f) through API. After API post-processing, the new canonical shape functions successfully match the probability trends for the corresponding classes in Figure \ref{fig:toy}(h) and are no longer misleading.

\section{Related Work}
Generalized additive models (GAMs) were first introduced (in statistics) 
to allow individual features to be modeled flexibly \cite{hastie1990generalized,wood2006generalized}. They are traditionally fitted using splines \cite{eilers1996}. 
Other base learners include trees \cite{lou2012intelligible}, 
trend filters \cite{tibshirani2014adaptive}, wavelets \cite{wand2011penalized}, etc. 

Comparing several different GAM fitting procedures including backfitting and simultaneous optimization,
\citeauthor{binder2008comparison} found that boosting performed particularly well in high-dimensional settings \cite{binder2008comparison}. \citeauthor{lou2012intelligible} developed the Explainable Boosting Machine (EBM) \cite{lou2012intelligible,lou2013accurate} which boosts shallow bagged tree base learners by repeatedly cycling through the available features. This paper generalizes EBM to the multiclass setting. 

We briefly review other available GAM software: \texttt{mboost} \cite{mboost} 
fits GAMs using component-wise gradient boosting \cite{buhlmann2003boosting}; 
\texttt{pyGAM} \cite{pyGAM}
fits GAMs with P-splines base learners using penalized iteratively reweighted least
squares. However, neither supports multiclass classification. \texttt{mgcv} \cite{mgcv}, a widely-used R package, fits GAMs with spline-based learners using penalized likelihood  and supports multiclass classification but is not scalable (cf. Section \ref{sec:accuracyresults} for more details).
To the best of our knowledge, our package is the first that can train large-scale, high-performance multiclass GAMs. 

Our work is also closely related to recent developments in interpretable machine learning. We distinguish between several lines of research that aim to improve the interpretability of machine learning models. The first line of work aims to explain the predictions of a black-box model, either locally \cite{lime,baehrens2010explain} or globally \cite{ribeiro2018anchors,tan2018transparent}. Another line of research aims at building interpretable models from the ground-up, such as rule lists \cite{letham2015,yang2017scalable}, scoring systems \cite{zeng2016interpretable}, decision sets \cite{lakkaraju2016interpretable}, and additive models \cite{lou2013accurate}. Finally, a third line of research tries to improve the interpretability of black-box models by regularizing their internal representations or explanations \cite{ross2017right, alvarez2018towards}. The majority of these works, however, focus on binary classification and regression. This paper is one of the first to address interpretability challenges in the multiclass setting.

It is worth pointing out that these various lines of work are fundamentally different and based upon different beliefs \cite{doshi2017towards,lipton2016mythos}. The first line of work is built upon the belief that it is sometimes acceptable to use black-box models that are not themselves interpretable, but where human users can understand how the black-box predictions/decisions were made with the help of explanation tools. The second line of work is built upon the belief that there is value in fully interpretable/transparent models even though black-box models might sometimes yield higher accuracy. As a result, although these lines of work are all concerned with interpretability, they cannot be easily compared. 

Because of the lack of other multiclass interpretable models to compare against, and because of the difficulty of comparing interpretable models with explanation methods, this paper focusses solely on interpretability within the GAM model class. 

\section{Notation and Problem Definition}
\label{sec:notation}
In this section, we define notation that will be used throughout the paper.
We focus on multiclass classification in which $\X\in \R^d$ is the input space and $\Y = [K]$ is the output space, where $K$ is the number of classes and $[K]$ denotes the set $\{1,...,K\}$. Let $\D = \{\x_n,y_n\}_{n=1}^N$ denote a training set of size $N$, where $\x_n = (x_{n1},...,x_{nd})\in\X$ is a feature vector with $d$ features and $y_n\in\Y$ is the target. For $k\in[K]$, let $p_k$ denote the empirical proportion of class $k$ in $\D$. Given a model $\Theta$, let $\Theta(\x_n)$ denote the prediction of the model on data point $\x_n$. Our learning objective is to minimize the expected value of some loss function $L(y,\Theta(\x))$. In multiclass classification, the model output is a probability distribution among the $K$ classes, $\hat\P(Y=k), k\in[K]$. We will be using the multiclass cross entropy loss defined as: 
\begin{eqnarray}\label{eq:loss}
L(y,\Theta(\x)) =-\sum_{k\in[K]} \mathbbm{1}_{y=k}\log\hat\P(Y=k).
\end{eqnarray}
We focus on GAM models of the form \eqref{eq:mtc_gam} with softmax probabilities. We denote $\F = \{f_{ij}: i\in[d], j\in[K]\}$ as the set of shape functions for a multiclass GAM model, and also as the model itself. Throughout the paper, we make the following assumptions of the multiclass shape functions $f_{ij}$'s. For continuous feature $i$, $f_{ij}$'s domain is a continuous finite interval $[a,b]$; for categorical or ordinal features, $f_{ij}$'s domain is a finite ordered discrete set.
Notice that we are enforcing an ordering on the otherwise unordered categorical variables in order to visualize the shape functions in a deterministic order. We denote the domain of feature $i$ as $X_i$. For the API post-processing method (Section \ref{sec:api}), we also assume that the shape functions $f_{ij}$ of continuous features are continuous everywhere except for a finite number of points. Note that this is a weak assumption, as most base learners used for fitting GAM shapes satisfy this assumption (e.g., splines are continuous and trees are piece-wise constant with a finite number of discontinuities). Finally, we overload the $\nabla$ operator as follows: In the continuous domain, $\nabla_x f = \lim_{\Delta x\rightarrow 0}\frac{f(x+\Delta x)-f(x)}{\Delta x}$ when $f_{ij}$'s are all continuous at $x$; $\nabla_x f =f(x^+)-f(x^-)$ when some $f_{ij}$'s are discontinuous at $x$. In the discrete domain, $\nabla_x f =f(x_{next})-f(x)$, where $x_{next}$ denotes the immediate next value.

\section{Multiclass GAM Learning via Cyclic Gradient Boosting}
We now describe the training procedure for MC-EBM, our generalization of binary EBM \cite{lou2012intelligible} to the multiclass setting. We use bagged trees as the base learner for boosting, with largest variance reduction as the splitting criterion. We control tree complexity by limiting the number of leaves $L$.
\subsection{Cyclic Gradient Boosting}
\label{sec:cyclic_gradient_boosting}
Our optimization procedure is cyclic gradient boosting \cite{buhlmann2003boosting,lou2012intelligible}, a variant of standard gradient boosting \cite{friedman2001greedy} where features are cycled through sequentially to learn each individual shape function. The algorithm is presented in Algorithm \ref{alg}.

In standard gradient boosting, each boosting step fits a base learner to the pseudo-residual, the negative gradient in the functional space \cite{friedman2001greedy}. 
In a multiclass setting with cross entropy loss \eqref{eq:loss} and softmax probabilities \eqref{eq:mtc_gam}, the pseudo-residual for class $j$ is:
\begin{eqnarray}
\tilde y_{j} = -\frac{\partial L(y,\{\hat\P(Y=j)\}_{j=1}^K)}{\partial F_{j}} = \mathbbm{1}_{y=j} - \hat\P(Y=j).\nonumber
\end{eqnarray}

Adding the fitted base learner (multiplied by a typically small constant $\eta$) to the ensemble corresponds to taking an approximate gradient step in the functional space with learning rate $\eta$.  
However, as suggested by \citeauthor{friedman2000additive} \cite{friedman2000additive}, to speed up computation one can instead take an approximate Newton step using a diagonal approximation to the Hessian. 
The resulting additive update to learn a multiclass GAM then becomes:
\begin{eqnarray}
f_{ik}^+ &=&  f_{ik} + \eta\sum_{l\in[L]} \gamma_{ilk}\mathbbm{1}_{x_i\in R_{il}}, \text{ where}\\
\gamma_{ilk} &=& \frac{K-1}{K}\frac{\sum_{\x\in R_{il}}\tilde y_{ik}}{\sum_{\x\in R_{il}}|\tilde y_{ik}|(1-|\tilde y_{ik}|)},\label{eq:gamma}
\end{eqnarray}
for $i\in[d], k\in[K], l\in[L]$, where $R_{il}$ is the set of training points in tree leaf $l$ for current feature $i$. Applying the above boosting procedure cyclically to individual features gives our multiclass cyclic boosting algorithm (Algorithm \ref{alg}).

\begin{algorithm}[h!]
	\caption{Multiclass GAM Learning via Cyclic Gradient Boosting (MC-EBM)}\label{alg}
	\begin{algorithmic}[1]
		\State $f_{ij} \gets 0$, for $i\in[d]$, $j\in[K]$
		\For{$m = 1$ to $M$}
		\For{$i = 1$ to $d$}
		\State \begin{varwidth}[t]{200pt}$\tilde y_{nj} \gets \mathbbm{1}_{y_n=j} - \hat\P(Y=j\vert X = \x_n)$, $n\in[N]$, $j\in[K]$.\end{varwidth}
		\For{$b = 1$ to $B$}
		\State \begin{varwidth}[t]{175pt} Create bootstrap sample $b$ from the training set $\left\{(x_n,\tilde y_{n})\right\}_{n=1}^N$.\end{varwidth}
		\State \begin{varwidth}[t]{175pt}Learn tree $\{R_{ilb}\}$ with $L$ leaf nodes on bootstrap sample $b$.\end{varwidth}
		\State Compute $\gamma_{iljb}$ using equation \eqref{eq:gamma}.
		\EndFor
		\State  \begin{varwidth}[t]{175pt}$f_{ij}\pluseq \eta\sum_{l=1}^L\left[\frac{1}{B}\sum_{b=1}^B \gamma_{iljb}\mathbbm{1}_{x_i\in R_{ilb}}\right]$, for $j = 1,...,K$.\end{varwidth}
		\EndFor
		\EndFor
	\end{algorithmic}
\end{algorithm}

\subsubsection{Hyperparameters.}
We found the following hyperparameters for MC-EBM to be high performing across all datasets: learning rate $\eta = 0.01$, number of leaves in tree $L=3$, number of bagged trees in each base learner $B=100$, number of boosting iterations $M=5,000$  with early stopping based on held-out validation loss. These are the default hyperparameter choices in \texttt{InterpretML}. 

\subsection{Accuracy on Real Datasets}
\label{sec:accuracyresults}
In this section, we evaluate MC-EBM against other multiclass baselines. We select five datasets with interpretable features and different numbers of classes, features, and data points. 
Table \ref{tab:dataset} describes them. Diabetes, Covertype, Sensorless and Shuttle are from the UCI repository; Infant Mortality (IM) is from the Centers for Disease Control and Prevention \cite{cdcIM}. We use normalized Shannon entropy $H= -(\sum_{k\in[K]} p_k\log p_k)/K$ to report the degree of imbalance in each dataset: $H=1$ indicates a perfectly balanced dataset (same number of points per class) while $H=0$ denotes a perfectly unbalanced dataset (all points in one class). For the IM dataset, due to its extreme class imbalance (more than 99\% of the data belongs to the `alive' class), we perform a 1\% downsampling of the `alive' data for accuracy comparison. Later, in Section \ref{sec:interpretability}, we use the whole IM dataset to train an MC-EBM model as a case study for multi-class interpretability.
\begin{table}[h!]
     \begin{tabularx}{\columnwidth}{ r r r R R }
     \toprule
     \textbf{Dataset} & \textbf{Classes} & \textbf{Features} & \textbf{$\mathbf{H}$} & \textbf{Size} \\ \midrule
     Shuttle & $7$  & $9$ & $0.342$ & $58,000$\\
     Covertype & $7 $ & $12$ & $0.619$ & $581,012$\\ 
     Diabetes & $3$  & $39$ & $0.845$ & $77,975$\\ 
     Sensorless & $11 $ & $48$ & $1.000$ & $58,509$\\
     IM & $12 $ & $85$  & $0.048$  & $3,961,221$\\  
     (1\%) IM & $12 $ & $85$  & $0.564$  & $ 62,944$\\
     \bottomrule
     \end{tabularx}
 \caption{Dataset characteristics.}
 \label{tab:dataset}
\end{table}

\subsubsection{Baselines.}
\begin{table*}[t!]
\centering
     \begin{tabular}{ c c c c c c }
     \toprule 
     \textbf{Model}& \textbf{Shuttle}&\textbf{Covertype}&\textbf{Diabetes}&\textbf{Sensorless}&\textbf{IM}\\
     \midrule
          & \multicolumn{5}{c}{\textbf{Balanced Accuracy on Test Sets}}\\

    GBT 
&$0.997\pm0.008$
&$0.938\pm0.003$
&$0.447\pm0.004$
&$0.999\pm0.000$
&$0.246\pm0.003$\\
\hline
    MC-EBM
    &$\mathbf{0.972\pm0.031}$

&$\mathbf{0.538\pm0.003}$
&$\mathbf{0.428\pm0.004}$
&$\mathbf{0.997\pm0.001}$
&$\mathbf{0.236\pm0.003}$
\\
MGCV
&$\mathbf{0.998\pm0.005}$

&$0.507\pm0.003$
&$0.332\pm0.003$
&$0.992\pm0.001$
&$0.231\pm0.002$
\\
LR
&$0.617\pm0.060$

&$0.356\pm0.004$
&$0.387\pm0.002$
&$0.832\pm0.006$
&$0.213\pm0.002$
\\\midrule

 & \multicolumn{5}{c}{\textbf{Cross-Entropy Loss on Test Sets}}\\
    GBT 
 &$0.002\pm0.000$

&$0.087\pm0.001$
&$0.821\pm0.007$
&$0.006\pm0.001$
&$0.799\pm0.009$
\\
\hline
    MC-EBM
&$\mathbf{0.001\pm0.000}$
&$\mathbf{0.608\pm0.002}$
&$\mathbf{0.840\pm0.007}$
&$\mathbf{0.017\pm0.002}$
&$\mathbf{0.829\pm0.010}$
\\
MGCV
&$\mathbf{0.001\pm0.001}$
&$0.617\pm0.002$
&$1.038\pm0.011$
&$0.036\pm0.003$
&$0.857\pm0.017$
\\
LR
&$0.208\pm0.003$
&$0.719\pm0.002$
&$0.876\pm0.006$
&$0.682\pm0.007$
&$0.892\pm0.010$
\\\bottomrule
     \end{tabular}
 \caption{Accuracy of MC-EBM compared to three baselines on five datasets.}
\label{tab:result}
\end{table*}
We compare MC-EBM to three baselines: \begin{itemize}
    \item \textbf{Multiclass logistic regression (LR)}, a simple multiclass interpretable model. This comparison tells us how much accuracy improvement is due to the non-linearity of MC-EBM. We use the \texttt{sklearn} implementation. 
    \item \textbf{Multiclass gradient boosted trees (GBT)}, an unrestricted, full-complexity model. This gives us a sense of how much accuracy we sacrifice in order to gain interpretability with GAMs. We use the \texttt{XGBoost} implementation \cite{Chen:2016:XST:2939672.2939785} and tune the hyperparameters using random search.
    \item \textbf{GAMs with splines (MGCV)}, a widely-used R package that fits GAMs with spline-based learners using a penalized likelihood procedure \cite{mgcv}. Unfortunately, as noted in the documentation\footnote{\url{https://stat.ethz.ch/R-manual/R-devel/library/mgcv/html/multinom.html}} and found by us, \texttt{mgcv}'s multiclass GAM fitting procedure does not scale beyond several thousand data points and five classes. Therefore, we trained $K$ GAMs with binary targets to predict whether a point belongs in class $k\in[K]$, then generated multiclass predictions for each point by normalizing the $K$ probabilities to sum to one.
    This comparison tells us whether our GAM learning algorithm based on boosted bagged trees is more accurate than one of the best state-of-the-art GAM implementations currently available.
\end{itemize}

\subsubsection{Experimental design.} For each dataset, we generated five train-validation-test splits of size 80\%-10\%-10\% to account for potential variability between test set splits, and report the mean and standard deviation of metrics over test set splits. 
We track two performance metrics on the test-sets: balanced accuracy and cross-entropy loss. The balanced accuracy metric addresses the imbalance of classes in classification tasks \cite{brodersen2010balanced}: $
BACC(f) = \frac{1}{K}\sum_{k=1}^K \P(f(\x) = k | y=k)
$.

\subsubsection{Results.}
\label{sec:accuracy}
The results are shown in Table \ref{tab:result}. The top half of the table reports the balanced accuracy of each model on the five datasets. The bottom half reports the cross-entropy loss on the test set. Several clear patterns emerge in both tables:\\
(1) MC-EBM consistently outperforms the LR baseline. For four out of five datasets (except for IM), the accuracy gap is larger than 5\%. This shows that the nonlinearity in MC-EBM consistently helps in fitting better models while remains interpretable.\\
(2) MC-EBM consistently outperforms MGCV across all five datasets over both metrics, showing that our implementation based on boosted trees beats a state-of-the-art GAM implementation based on splines.\\
(3) GBT, the full-complexity model still outperforms MC-EBM with restricted capacity. However, on four out of five datasets (except for Covertype), the accuracy gap between GBT and MC-EBM is smaller than 5\%. This indicates that higher order interactions, which are captured by GBT but not by GAMs, are not always helpful in predictive tasks. In some domains, an interpretable model such as GAM can achieve similar performance to a full complex model.\\
(4) Interestingly, on datasets with very imbalanced classes (IM and Shuttle), MC-EBM performs reasonably well compared to GBT, even though no explicit method countering class imbalance (e.g. loss function re-weighting) is used in MC-EBM.

In conclusion, we have presented a scalable, high-performing multiclass GAM fitting algorithm which requires little hyperparameter tuning. In the next section, we turn our attention to the interpretability of multiclass additive models.
\section{Interpretability of Multiclass Additive Models}
\label{sec:interpretability}
Multiclass GAMs are hard to interpret fundamentally because each class's prediction necessarily involves the shape functions of all $K$ classes. However, research has found that human perception cannot effectively dissect interactions between more than a few function curves~\cite{javed2010graphical}. Thus, we need to find a way to allow each shape function to be examined individually, while still conveying useful and faithful information about the model's predictions. To do so, we first revisit the binary classification setting and define what `useful and faithful information' is. Throughout this section, we will use notation defined in Section \ref{sec:notation}.

\subsection{Axioms of Interpretability: Inspiration from Binary GAMs}
\label{sec:axioms}
What information do people gain from binary shape functions and what aspect of shape functions carries that information? As demonstrated in the pneumonia example in Figure \ref{fig:age_binary}, when practitioners look at a binary GAM shape plot, they try to determine which feature values contribute positively or negatively to the outcome by looking at the monotonicity of the shape functions in different regions of the feature's domain. They also look for discontinuities in the shape functions that indicate sudden increases or decreases in the predicted probability. These sudden changes often carry rich information. For example, one might expect the influence of age on pneumonia risk to be smooth --- one's health at age 67 should not be dramatically different than at age 66 --- and the appearance of jumps may hint at the existence of hidden variables such as retirement that warrant further investigation. Because human perception naturally focuses on discontinuities in otherwise smooth curves, it is important for shape functions to be smooth when possible, so that the real discontinuities can stand out. 

In binary GAMs, the monotonicity and discontinuity of individual shape functions faithfully represent the trend and jumps of the model's predictions. We would like to be able to interpret multiclass GAMs the same way. To achieve this, we propose two interpretability axioms that every multiclass additive model should satisfy in order to be interpreted easily.\\
\textbf{A1: The axiom of monotonicity} asks that for each feature, the monotonicity of shape functions for all classes should match the monotonicity of the `average' predicted probability of that class. Mathematically:
\begin{defn}[The axiom of monotonicity]
For each class $k$, feature $i$ and feature value $v$, denote the marginal distribution of points satisfying $x_i = v$ as $\P_{x_i=v} = \P(X\vert x_i = v)$. Then, a multiclass GAM $\F$ satisfies the axiom of monotonicity if
\begin{eqnarray}
&\nabla_{x_i}f_{ik}\times\left(\E_{\P_{x_i=v}} \nabla_{x_i}\log(\hat\P_k)\right)\geq 0
\label{eq:axiom1}
\end{eqnarray}
$\forall i\in[d], k\in[K], v\in X_i$,
\end{defn}
\noindent\textbf{A2: The axiom of smoothness} asks that the shape functions do not have any artificial or unnecessary discontinuities. Mathematically:
\begin{defn}[The axiom of smoothness]
$\F$ satisfies the axiom of smoothness if
\begin{eqnarray}
\F = \argmin_{E_\F} \sum_{i\in[d]}\sum_{k\in[K]}V(f_{ik})\label{eq:axiom2}
\end{eqnarray}
where $V$ is some smoothness metric and $E_\F$ denote the equivalence class of $\F$, defined in the next section.
\end{defn}

To measure the smoothness of 1-d functions such as our shape functions, we use \textit{quadratic variation}:
\begin{defn}[Quadratic Variation]
For functions defined on a finite ordered discrete domain of size S, quadratic variation is
\begin{eqnarray}
V(f) = \sum_{s\in[S-1]} |\nabla_x f(x_{s})|^2.\nonumber
\end{eqnarray}
For functions defined on a continuous interval $[x_0,x_S]$ with finite points of discontinuity $\{x_1,...,x_{S-1}\}$, quadratic variation is:
\begin{eqnarray}
V(f) = \sum_{s=0}^{S-1} \int^{x_{s+1}}_{x_s} \left|\nabla_x f\right|^2dx + \sum_{s=1}^{S-1} |\nabla_x f(x_{s})|^2\nonumber
\end{eqnarray}
\end{defn}

Does there exist a multiclass GAM model that satisfies both axioms?  Figure \ref{fig:age_mtc} in Section \ref{sec:introduction} is an example of one. 
By transforming the binary pneumonia GAM model (Figure \ref{fig:age_binary}) to a multiclass GAM model with two classes (Figure \ref{fig:age_mtc}), the model changes from
$
\frac{1}{1+\exp\left(-\sum f_i(x_i)\right)}$ to $\frac{\exp\left(\frac{1}{2}\sum f_i(x_i)\right)}{\exp\left(\frac{1}{2}\sum f_i(x_i)\right)+\exp\left(-\frac{1}{2}\sum f_i(x_i)\right)}
$.
The blue curve representing risk of dying is exactly the same as the binary age shape and is therefore faithful to the model prediction.
The orange curve representing the 'risk' of surviving is exactly the mirror image of the risk of dying. Since in the binary case the probability of dying is always one minus the chance of surviving, the orange curve is faithful to its own class as well. Does this generalize to settings with more than two classes? The answer is YES.

\subsection{Leveraging Key Properties of Multiclass GAMs to Regain Interpretability}
\label{sec:properties}
We have proposed two axioms satisfied by binary GAMs that multiclass GAMs should also satisfy in order to not be visually misleading, and provided an example of a (two-class) multiclass GAM model that satisfies these axioms. We now highlight two key properties shared by all multiclass GAM models that we will leverage in Section \ref{sec:api} to post-process {\em any} multiclass GAM model to satisfy these axioms. These properties stem from the softmax formulation (Equation \eqref{eq:mtc_gam}) used by these models. 

\noindent\textbf{P1: Equivalence class of multiclass GAMs.}
Different GAMs can produce equivalent model predictions.
In particular, we have the following equivalence relationship:
\begin{prop}\label{equivalence} Let $\F$ and $\F'$ be two GAMs defined as
\begin{eqnarray}
\F &=& \{f_{ij} \, | \, i\in[d], k\in[K]\},\nonumber \\
\F'&=& \{f_{ij}+g_{i} \, | \, i\in[d], k\in[K]\},\nonumber
\end{eqnarray}
for some arbitrary functions $g_i$'s. Then, $\F$ and $\F'$ are equivalent in terms of model prediction, and we define the \textit{equivalence class} of $\F$ as $E_\F = \{\F'\vert \F'\equiv \F\}$.
\end{prop}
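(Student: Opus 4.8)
The plan is to reduce everything to the translation-invariance of the softmax map. First I would compute the class-$k$ logit of $\F'$: since each of its shape functions is $f_{ik}+g_i$, summing over features gives
\begin{eqnarray}
F_k'(\x) \;=\; \sum_{i=1}^{d}\bigl(f_{ik}(x_i)+g_i(x_i)\bigr) \;=\; F_k(\x) + G(\x),\nonumber
\end{eqnarray}
where $G(\x):=\sum_{i=1}^{d} g_i(x_i)$ is a function of $\x$ that is \emph{independent of the class index $k$}. This is the whole point of the perturbation having the form specified in the proposition: it shifts every logit by one and the same amount.

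Next I would substitute into the softmax formula \eqref{eq:mtc_gam} and cancel the common factor. For the perturbed model,
\begin{eqnarray}
\hat\P'(Y=k)
= \frac{\exp\!\bigl(F_k(\x)+G(\x)\bigr)}{\sum_{j=1}^{K}\exp\!\bigl(F_j(\x)+G(\x)\bigr)}
= \frac{e^{G(\x)}\exp\!\bigl(F_k(\x)\bigr)}{e^{G(\x)}\sum_{j=1}^{K}\exp\!\bigl(F_j(\x)\bigr)}
= \hat\P(Y=k),\nonumber
\end{eqnarray}
the cancellation being valid because $d$ is finite and each $g_i$ is real-valued, so $e^{G(\x)}$ is a strictly positive finite scalar. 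Since this holds for every $k\in[K]$ and every $\x\in\X$, the two models output identical probability vectors, i.e. $\F\equiv\F'$, which is exactly the claim; $E_\F$ is then well defined as the stated set. If desired I would add one line noting that $\equiv$ is a genuine equivalence relation (reflexive via $g_i\equiv 0$, symmetric via $g_i\mapsto -g_i$, transitive by adding the shifts), so that "equivalence class" is warranted.

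I do not expect any real obstacle in this proof; the only points to watch are (i) that the statement compares predicted \emph{probabilities}, not raw logits --- the logits do change, by $G(\x)$ --- and (ii) finiteness of $G(\x)$, which legitimizes the cancellation. The genuinely nontrivial work, namely selecting \emph{which} $g_i$'s inside $E_\F$ make the shape functions satisfy axioms A1 and A2 simultaneously, is what Section \ref{sec:api} must carry out; Proposition \ref{equivalence} only supplies the degrees of freedom that make such a choice possible.
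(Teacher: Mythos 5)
Your proposal is correct and follows essentially the same route as the paper: the class-independent shift $G(\x)=\sum_i g_i(x_i)$ added to every logit cancels as a common factor $e^{G(\x)}$ in the softmax numerator and denominator, which is exactly the invariance-under-constant-shift argument the paper gives. The extra remarks on finiteness of $G(\x)$ and on $\equiv$ being an equivalence relation are fine but not needed beyond what the paper states.
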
 
\begin{proof}
Notice that unlike the binary GAMs' logistic probabilities, softmax probabilities are invariant with respect to a constant shift of the logits due to the softmax being overparametrized.
Therefore we can add a constant $g_i$ to all $K$ logits without changing the predicted probability, i.e.
\begin{align}
\hat\P(y=k) &= \frac{\exp\left(\sum_{i=1}^df_{ik}(x_i)\right)}{\sum_{j=1}^{K}\exp\left(\sum_{i=1}^df_{ij}(x_i)\right)}\nonumber\\
&= \frac{\exp\left(\sum_{i=1}^df_{ik}(x_i)+\sum_{i=1}^dg_{i}(x_i)\right)}{\sum_{j=1}^{K}\exp\left(\sum_{i=1}^df_{ij}(x_i)+\sum_{i=1}^dg_{i}(x_i)\right)}\nonumber
\end{align}
\end{proof}
We will use this invariance property in our additive post-processing (API) method presented in Section \ref{sec:api} 
to find a more interpretable $\F'$ equivalent to $\F$.

\noindent\textbf{P2: Ranking consistency between shape functions and class probabilities.}
Another characteristic of the softmax is the ranking consistency between the change in shape function values and the change in predicted class probability:
\begin{prop}
\label{cor:rankingconsistency}
Let $\x=(x_1,...,x_i,...,x_d)$ and $\x' = (x_1,...,x_i',...,x_d)$ be two data points sharing the exact same feature values except for one particular feature $i$. Let $\{\delta_j\}_1^K$ be the differences between their corresponding logits due to the difference in feature $i$. Then, the ranking of $\{\delta_j\}_1^K$ across $j$ is consistent with the ranking of the ratios of predicted probabilities $\left\{\frac{\hat\P_j(\x')}{\hat\P_j(\x)}\right\}_1^K$ across $j$.
\end{prop}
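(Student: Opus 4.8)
The plan is to write the softmax probability ratio explicitly, observe that it factors into a class-independent positive scalar times $\exp(\delta_j)$, and then invoke the strict monotonicity of the exponential function.

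First I would fix the two points $\x$ and $\x'$ and set $\delta_j = F_j(\x') - F_j(\x) = f_{ij}(x_i') - f_{ij}(x_i)$, using that this difference involves only the shape functions of feature $i$ because all other coordinates of $\x$ and $\x'$ agree. Substituting $F_j(\x') = F_j(\x) + \delta_j$ into the softmax formula \eqref{eq:mtc_gam} gives
\[
\hat\P_j(\x') = \frac{\exp\left(F_j(\x) + \delta_j\right)}{\sum_{k\in[K]}\exp\left(F_k(\x)+\delta_k\right)}.
\]
Dividing by $\hat\P_j(\x) = \exp(F_j(\x)) / \sum_{k\in[K]}\exp(F_k(\x))$, the factor $\exp(F_j(\x))$ cancels and one is left with
\[
\frac{\hat\P_j(\x')}{\hat\P_j(\x)} = \exp(\delta_j)\cdot C,
\qquad
C := \frac{\sum_{k\in[K]}\exp\left(F_k(\x)\right)}{\sum_{k\in[K]}\exp\left(F_k(\x)+\delta_k\right)},
\]
where $C > 0$ and, crucially, $C$ does not depend on the class index $j$.

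Then I would conclude: since $C$ is a fixed positive constant and $t \mapsto C\exp(t)$ is strictly increasing, we have $\delta_j \ge \delta_{j'} \iff C\exp(\delta_j) \ge C\exp(\delta_{j'}) \iff \hat\P_j(\x')/\hat\P_j(\x) \ge \hat\P_{j'}(\x')/\hat\P_{j'}(\x)$, with equality on one side exactly when there is equality on the other. This is precisely the asserted ranking consistency: the induced ordering of $[K]$ by $\{\delta_j\}$ and the induced ordering by $\{\hat\P_j(\x')/\hat\P_j(\x)\}$ coincide, ties included.

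I do not expect a genuine obstacle here; the proof is a one-line factorization plus monotonicity. The only point deserving care is making explicit that the normalization ratio $C$ is truly independent of $j$ — this is what lets the probability ratios be read off as a common strictly monotone reparametrization of the logit differences — and stating precisely what ``ranking consistency'' means (equality of the two induced preorders on $[K]$).
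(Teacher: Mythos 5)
Your proof is correct and follows essentially the same route as the paper: both arguments reduce to the fact that the softmax normalizer is class-independent, so the probability ratios are a common positive multiple of $\exp(\delta_j)$. The only cosmetic difference is that you factor out the constant $C$ once and invoke monotonicity of $t\mapsto C\exp(t)$, whereas the paper argues pairwise via the odds ratio $\hat\P_j/\hat\P_k$; your version is, if anything, slightly more explicit about ties and about what ``ranking consistency'' means.
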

\begin{proof}
Simple calculation shows that $\delta_j = f_{ij}(x_i') - f_{ij}(x_i)$, for all $j$. Now, suppose that $\delta_j\geq\delta_k$ for some particular $j,k\in[K]$, then we have
\begin{eqnarray}
\frac{\hat\P_j(\x')}{\hat\P_k(\x')} = \frac{\hat\P_j(\x)}{\hat\P_k(\x)}\cdot\frac{\exp(\delta_j)}{\exp(\delta_k)}\geq \frac{\hat\P_j(\x)}{\hat\P_k(\x)}
\end{eqnarray}
which implies that 
\begin{eqnarray}\label{eqn:inequality}
\frac{\hat\P_j(\x')}{\hat\P_j(\x)}\geq
\frac{\hat\P_k(\x')}{\hat\P_k(\x)}.
\end{eqnarray}
This property holds for all $(j,k)$ pairs.
\end{proof}
This ranking consistency property will come in useful in the optimization of our API method (cf. Section \ref{sec:api}).

\begin{figure*}[ht!]
	\centering
	\begin{minipage}[t]{\textwidth}
		\centering
		\includegraphics[width=\columnwidth]{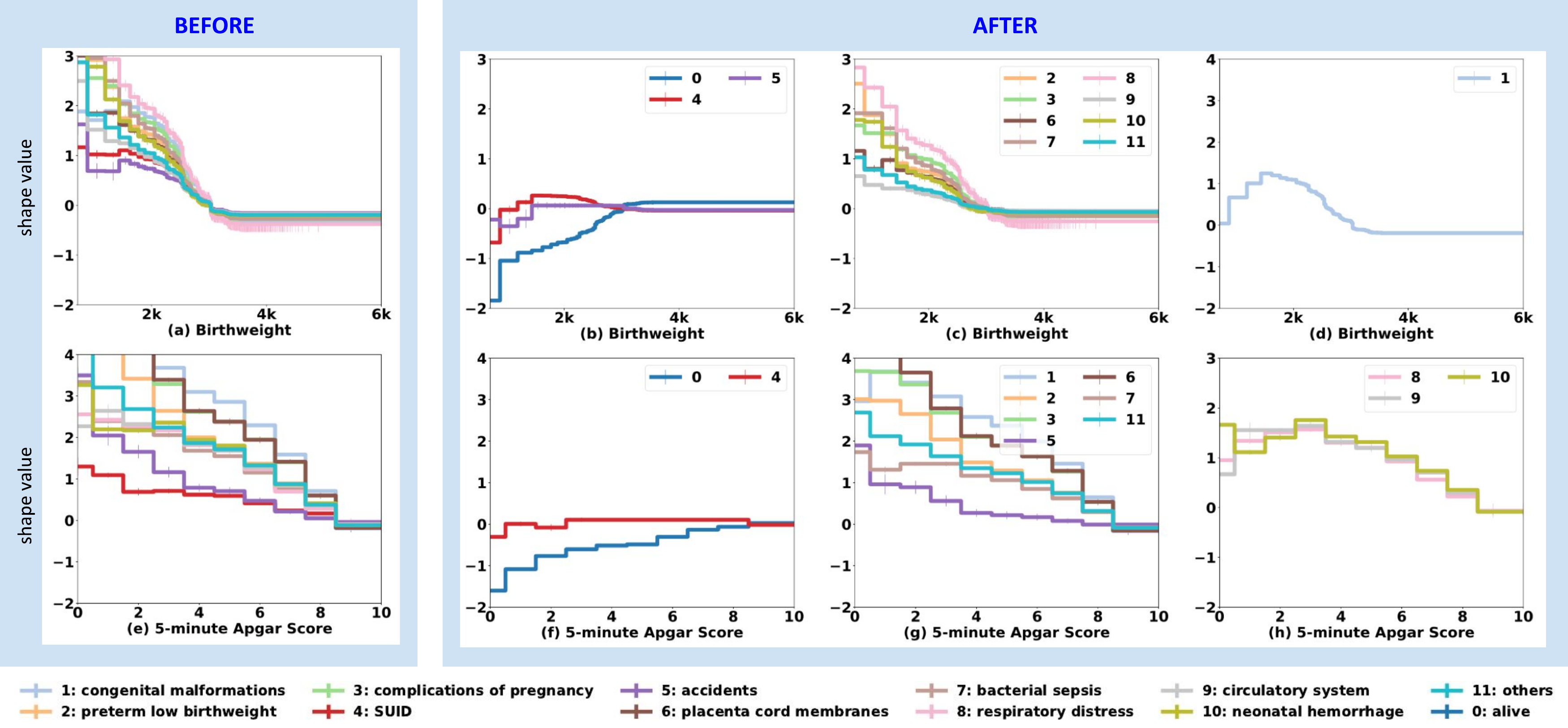}
	\end{minipage}
	\caption{Shape functions for the IM data, before and after applying our API post-processing method.}
	\label{fig:sids}
\end{figure*}
\subsection{Additive Post-Processing for Interpretability}
\label{sec:api}
We now describe our post-processing method, API, that leverages the softmax's properties (cf. Section \ref{sec:properties}) to modify any multiclass additive model to regain interpretability (cf. Section \ref{sec:axioms}), while keeping its predictions unchanged. Given a pretrained GAM model $\F$, API finds another equivalent additive model $\F'$ that satisfies the axiom of monotonicity while fulfilling the minimization condition of the axiom of smoothness. We formulate this as a constrained optimization problem in functional space to find the set 
$\{g_1,...,g_d\}$ defining $\F'$ while minimizing objective \eqref{eq:axiom2} and satisfying condition \eqref{eq:axiom1}:
\begin{eqnarray}
\min_{g_1,...,g_d}&& \sum_{i\in[d]}\sum_{k\in[K]}V(f_{ik}+g_{i})\label{eq:objective}\\
\mbox{s.t.}&& (\nabla_{x_i}f_{ik}+\nabla_{x_i}g_i)\cdot\left(\E_{\P_{x_i=v}} \nabla_{x_i}\log(\hat\P_k)\right)\geq 0 \nonumber\\
&&\forall i\in[d], k\in[K], v\in X_i\label{eq:constraint}
\end{eqnarray}
Before we discuss how to solve this optimization problem, we first show that there is a solution: 
\begin{thm}
Condition \eqref{eq:constraint} is feasible.
\end{thm}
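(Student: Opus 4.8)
The plan is to prove feasibility \emph{constructively}, by exhibiting an explicit equivalent model $\F'=\{f_{ij}+g_i\}$ for which every summand on the left-hand side of \eqref{eq:constraint} becomes a perfect square. First I would write $\log\hat\P_k(\x)=F_k(\x)-\Phi(\x)$ with $\Phi(\x):=\log\sum_{j=1}^{K}\exp(F_j(\x))$, and note that $\Phi$ carries no dependence on the class index $k$. Applying the (linear) operator $\nabla_{x_i}$ gives $\nabla_{x_i}\log\hat\P_k=\nabla_{x_i}f_{ik}-\nabla_{x_i}\Phi$; and since $\nabla_{x_i}f_{ik}$ is a function of $x_i$ alone, conditioning on $x_i=v$ pulls it outside the expectation, so that $\E_{\P_{x_i=v}}\nabla_{x_i}\log\hat\P_k=\nabla_{x_i}f_{ik}(v)-c_i(v)$ where $c_i(v):=\E_{\P_{x_i=v}}\nabla_{x_i}\Phi$ depends on $i$ and $v$ but not on $k$. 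This class-independence is the crux of the argument.

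Next I would choose $g_i$ so that $\nabla_{x_i}g_i\equiv-c_i$. On a discrete domain this is immediate: fix $g_i$ at the smallest point of $X_i$ and extend by the telescoping rule $g_i(v_{next})=g_i(v)-c_i(v)$. On a continuous interval, $c_i$ inherits from the $f_{ij}$ the structure of a genuine derivative-value away from the finitely many points where some $f_{ij}$ jumps (there it is a bounded convex combination of the slopes $\nabla_{x_i}f_{ij}$, hence integrable) and of an expected jump at those points; so I would take $g_i$ to be minus the integral of the derivative-part plus a step function absorbing the finitely many jumps. Under the paper's regularity assumptions this $g_i$ is well-defined and continuous except at finitely many points, so $\F'$ is an admissible multiclass GAM, and by Proposition \ref{equivalence} it has exactly the same predictions as $\F$ --- hence $\hat\P_k$, and therefore the second factor of \eqref{eq:constraint}, is unchanged when we pass from $\F$ to $\F'$.

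Finally I would substitute: with this $g_i$ one has $\nabla_{x_i}f_{ik}(v)+\nabla_{x_i}g_i(v)=\nabla_{x_i}f_{ik}(v)-c_i(v)=\E_{\P_{x_i=v}}\nabla_{x_i}\log\hat\P_k$, so the $(i,k,v)$ term of \eqref{eq:constraint} equals $\bigl(\E_{\P_{x_i=v}}\nabla_{x_i}\log\hat\P_k\bigr)^2\ge 0$. Since $i,k,v$ are arbitrary, \eqref{eq:constraint} is feasible, witnessed by $\{g_1,\dots,g_d\}$.

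The main obstacle is not the algebra --- which is just the linear structure of the softmax --- but the bookkeeping around the overloaded $\nabla$ operator: I must make sure the jumps of $g_i$ are placed only at points where the $f_{ij}$ already have discontinuities, so that the identity $\nabla_{x_i}(f_{ik}+g_i)=\nabla_{x_i}f_{ik}+\nabla_{x_i}g_i$ holds term-by-term (continuity-value with continuity-value, jump with jump) and $\F'$ stays inside the assumed function class. Note that Proposition \ref{cor:rankingconsistency} is not actually needed for feasibility --- it is what later makes the constrained optimization \eqref{eq:objective}--\eqref{eq:constraint} tractable --- although it does offer an alternative, ranking-based route to the same existence conclusion.
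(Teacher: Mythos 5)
Your proof is correct, but it takes a genuinely different route from the paper's. The paper argues feasibility via the ranking-consistency property (Proposition~\ref{cor:rankingconsistency}): for each $(i,v)$ the ranking of the per-class increments $\nabla_{x_i}f_{ik}(v)$ matches, pointwise and hence after taking expectations, the ranking of $\E_{\P_{x_i=v}}\nabla_{x_i}\log(\hat\P_k)$, so the classes with nonnegative expected log-probability change form an upper set in the $\nabla_{x_i}f_{ik}$ ordering and a single constant shift $\nabla g_i(v)$ can be threaded between them and the rest to align all $K$ signs --- a pure existence argument (which is also the form the closed-form API solution later exploits). You instead exploit the softmax structure directly: writing $\log\hat\P_k=F_k-\Phi$ with a class-independent normalizer $\Phi$, linearity of the overloaded $\nabla$ and the fact that $\nabla_{x_i}f_{ik}$ depends only on $x_i$ give $\E_{\P_{x_i=v}}\nabla_{x_i}\log\hat\P_k=\nabla_{x_i}f_{ik}(v)-c_i(v)$ with $c_i$ independent of $k$, and the explicit choice $\nabla_{x_i}g_i=-c_i$ turns every term of \eqref{eq:constraint} into a square. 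This is constructive and actually proves more than the paper does: the shifted slopes can be made to \emph{equal} the expected log-probability slopes rather than merely match their signs, and your decomposition is the underlying reason Proposition~\ref{cor:rankingconsistency} holds in the first place, so you are right that it is not needed for feasibility. What your witness does not attempt (and need not, for this theorem) is optimality with respect to the smoothness objective \eqref{eq:objective}, which is where the paper's ranking-based clipping in Algorithm~\ref{alg:api} earns its keep; also note your care about where $g_i$ may jump and whether $c_i$ integrates to a legitimate $g_i$ is, if anything, more scrupulous than the paper, which handles that step implicitly via ``recover $g_i$ via integration or summation.''
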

\begin{proof}
Let $i$ be a feature and $\x$ be a data point with $x_i = v$.
Here, we only present the proof for the case where the domain of feature $i$ is continuous and the shape functions $\{f_{ij}\}$ are differentiable at $x_i=v$. The proofs for the other two cases are similar.

Applying the definition of $\nabla$, we have
\begin{eqnarray}
\nabla_{x_i}\log(\hat\P_k) &=& \lim_{\Delta x \rightarrow 0}\frac{1}{\Delta x}\left[\frac{\hat\P_k(v+\Delta x)}{\hat\P_k}-1\right]\nonumber\\
\nabla_{x_i}f_{ik} &=& \frac{1}{\Delta x}\left[f_{ik}(v+\Delta x) - f_{ik}(v)\right]\nonumber
\end{eqnarray}
The ranking consistency property (Corollary \ref{cor:rankingconsistency}) therefore guarantees that the ranking among $\nabla_{x_i}f_{ik}$ is the same as the ranking among $\nabla_{x_i}\log(\hat\P_k)$. This is true for every individual data point with $x_i = v$. Then, due to the invariance of the inequality under expectation, we have that the ranking among $\nabla_{x_i}f_{ik}$ is the same as the ranking among $\E_{\P_{x_i=v}}\nabla_{x_i}\log(\hat\P_k)$. Therefore, there must exist a constant $\nabla g_i(v)$ such that the sign of $\nabla_{x_i}f_{ik}(v)+\nabla g_{i}(v)$ equals the sign of $\E_{\P_{x_i=v}}\nabla_{x_i}\log(\hat\P_k)(v)$ for all $k\in[K]$. This holds for all features $i\in[d]$ and values $v\in X_i$. Therefore, Condition \eqref{eq:constraint} is feasible.
\end{proof}
\begin{algorithm}[ht]
	\caption{Additive Post-Processing for Interpretability (API)}\label{alg:api}
	\begin{flushleft}
        \textbf{INPUT:} A pretrained GAM $\F = \{f_{ij}\}$.\\
        \textbf{OUTPUT:} Interpretable GAM $\F'$.
\end{flushleft}
	\begin{algorithmic}[1]
	\For{$i = 1$ to $d$}
	\For{$k = 1$ to $K$}
	\State Define function $\bar p_{ik}(v) = \E_{\P_{x_i=v}}\nabla_{x_i}\log(\hat\P_k)$.
	\EndFor
	\State Define function $\bar f_i = \frac{1}{K} \sum_{k=1}^K f_{ik}$.
	\State Define function $J^+_i = \argmin_{k\in [K]\mbox{, } \bar p_{ik}\geq 0} \bar p_{ik}$.
	\State Define function $J^-_i = \argmax_{k\in [K]\mbox{, } \bar p_{ik}< 0} \bar p_{ik}$.
    \State $\nabla g_i \gets \max\left(-f_{iJ_i^+},\min\left(-\bar f_i,-f_{iJ_i^-}\right)\right)$.
    \State Recover $g_i$ via integration or summation depend on the domain type of $f_{ij}$.
    \EndFor
    \State Return $\F' = \{f_{ij}+g_i\}$.
	\end{algorithmic}
\end{algorithm}
Now to solve optimization problem \eqref{eq:objective}, observe that both the objective function and the constraints are separable with respect to the feature set $i\in[d]$ and the feature values $v\in X_k$, and the optimization problem can be reparametrized to be a problem over $\nabla_{x_i} g_i(v)$. Therefore, problem \eqref{eq:objective} can be solved by individually solving
\begin{eqnarray}
\min_{\nabla_{x_i}g_i(v)} &&\sum_{k=1}^{K} \left|\nabla_{x_i}f_{ik}(v)+\nabla_{x_i}g_i(v)\right|^2\label{eq:individual_objective}\nonumber\\
\mbox{s.t.} &&\nabla_{x_i}(f_{ik}+g_i)(v)\left(\E_{\P_{x_i=v}} \nabla_{x_i}\log(\hat\P_k)\right)\geq 0 \nonumber\\
&&\forall k\in[K],\nonumber
\end{eqnarray}
for all $i\in[d]$ and $v\in X_k$. It therefore becomes a set of 1-d quadratic programs with linear constraints, which can be solved in closed form. The closed form solution gives rise to the API post-processing method presented in Algorithm \ref{alg:api}.

In the next section, we present a case study in which we apply API to the shape functions of a multiclass GAM model trained on a 12-class infant mortality dataset, and show that, with the help of API, the shape functions reveal interesting patterns in the learned model that would otherwise be difficult to see.

\subsection{Interpretability in Action on Real Data: Infant Mortality Dataset (IM)}
\label{sec:sids}
The IM dataset \cite{cdcIM} contains data on all live births in the United States in 2011.  It classifies newborn infants into 12 classes: alive, top 10 distinct causes of death (see Figure~\ref{fig:sids} legend), and death due to other causes. The usual way of visualizing multiclass additive models, used in packages such as \texttt{mgcv} \cite{mgcv}, plots the logit relative to a \textit{base class} that is the majority or `normal' class: in IM the class `alive' is the natural base class. Note that this post-processing forces the logit for class `alive' to zero for all values of each feature so that the risk of other classes is relative to the `alive' class. 

The first column in Figure~\ref{fig:sids} shows this view of the shape functions for features `birthweight' and `apgar' denoting the weight of the infant at birth and the 5-minute Apgar score (on a scale of 0-10) capturing the infant's general health after the first five minutes of life . Interpreting the model from these two plots (Figure \ref{fig:sids}(a),(e)), one may conclude that the risk for almost all causes of death is high for infants with low birthweight or low Apgar score, since all 11 curves in both plots are monotonically decreasing as birthweight rises from 0 to 3000g and as the Apgar score rises from 0 to 9. However, as pointed out in the beginning of Section~\ref{sec:interpretability}, shape functions without applying API will generally not represent the actual predicted probabilities of the corresponding classes. These shapes only represent the \textit{relative} probability between each cause of death with respect to being alive. However, as we will soon see, the relative probability can disagree dramatically with the actual predicted probability for each cause of death.
In fact, a medical expert who was invited to examine these two plots, found them misleading and questioned ``why risk did not appear to differ more by cause of death''.

The three columns on the right show the shape functions for the same two features, `birthweight' and `apgar', after applying the API method. For the sake of demonstration, for each feature we split the 12 shapes into three figures. Keep in mind that after API post-processing, the trend of the shapes agrees with the trend of the corresponding class probabilities. One can see that the chance of living (class 0) is indeed monotonically decreasing as birthweight and the Apgar score get lower  (Figure~\ref{fig:sids}(b),(f)). However, not all causes of death are affected in the same way by the two features.

Low birthweight infants are more likely to die from complications related to preterm birth and/or low birthweight status, complications of pregnancy, problems related to placenta, cord, and membranes, from respiratory distress, bacterial sepsis, neonatal hemorrhage and (to a lesser degree) circulatory system problems (2-3,6-10 in Figure~\ref{fig:sids}(c)), while the risk of low birthweight infants dying from SUID (sudden unexpected infant death) is only slightly elevated, and the risk of dying from accidents is actually lower for the smallest babies (4,5 in Figure~\ref{fig:sids}(b)). For congenital malformations, the risk peaks at birthweight 1.5kg (1 in Figure~\ref{fig:sids}(d)), but drops as birthweight gets even smaller. These observations were confirmed by medical experts and agree with known domain knowledge.

For the Apgar score, the causes of death exhibit three different patterns. As the score gets lower, we observe increased risk of death from congenital malformations, complications due to preterm birth and/or low birthweight, complications of pregnancy, problems related to placenta, cord, and membranes and bacterial sepsis (1-3,5-7 in Figure~\ref{fig:sids}(g)). SUID is least affected by the Apgar score (4 in Figure~\ref{fig:sids}(f)). The 3rd category (Figure~\ref{fig:sids}(h)) is especially interesting. The risk of death from respiratory distress, circulatory system problems and neonatal hemorrhage appear to all peak around Apgar score of 3-4.

This short case study demonstrates that multiclass GAM shape functions are more readily interpretable after API (three columns on the right in Figure~\ref{fig:sids}) compared to the traditional presentation (column on the left in Figure~\ref{fig:sids}). In particular, the shape plots after API successfully show the diversity between different causes of death that is not immediately apparent in the plots before API.

\section{Discussion and Conclusions}

We have presented a comprehensive framework for constructing interpretable multiclass generalized additive models. The framework consists of a multiclass GAM learning algorithm, MC-EBM, and a model-agnostic post-processing procedure, API, that transforms any multiclass additive model into a more interpretable, canonical form. The API post-processing method provably satisfies two interpretability axioms that, when satisfied, allow the learned shape functions to be looked at individually and prevent them from being visually misleading. The API method is general, and can also be applied to simple additive models such as multiclass logistic regression to create a more interpretable, canonical form.

The MC-EBM algorithm and API post-processing method are efficient and easily scale to large datasets with hundreds of thousands of points and hundreds or thousands of features. We are currently generalizing both the MC-EBM algorithm and API post-processing method to work with GAMs that include higher-order interactions such as pairwise interactions. 

Even though this work focuses primarily on training interpretable models from ground-up, the challenge of interpreting multi-class predictions addressed in this paper and the corresponding solution might also benefit explanation methods for black-box models. In particular, explanation methods using model distillation, such as LIME \cite{lime}, often use simple linear models as the student model to produce a local interpretable approximation to the otherwise complex black-box model. However, when the problem is multiclass and when the user is interested in interpreting the prediction of several classes simultaneously, the same problem would arise, and the same solution, API, applies.
\subsection*{Acknowledgement}
We thank Dr. Ed Mitchell from the University of Auckland for valuable feedback on our algorithm's results on the infant mortality dataset. Xuezhou Zhang worked on this project during an internship at Microsoft Research.
\bibliographystyle{ACM-Reference-Format}
\bibliography{nllb}
\end{document}